\DeclareMathOperator{\Var}{Var}
\DeclareMathOperator{\Cov}{Cov}
\begin{document}
\title{Selective Pseudo-Label Clustering}
\author{Louis Mahon, Thomas Lukasiewicz}
\author{Louis Mahon \and
Thomas Lukasiewicz}

\authorrunning{L. Mahon et al.}
%
\institute{Department of Computer Science\\
University of Oxford, UK}

\maketitle              

\begin{abstract}
Deep neural networks (DNNs) offer a means of addressing the challenging task of clustering high-dimensional data. DNNs can extract useful features, and so produce a lower dimensional representation, which is more amenable to clustering techniques. As clustering is typically performed in a purely unsupervised setting, where no training labels are available, the question then arises as to how the DNN feature extractor can be trained. The most accurate existing approaches combine the training of the DNN with the clustering objective, so that information from the clustering process can be used to update the DNN to produce better features for clustering. One problem with this approach is that these ``pseudo-labels'' produced by the clustering algorithm are noisy, and any errors that they contain will hurt the training of the DNN. In this paper, we propose selective pseudo-label clustering, which uses only the most confident pseudo-labels for training the~DNN. We formally prove the performance gains under certain conditions. Applied to the task of image clustering, the new approach achieves a state-of-the-art performance on three popular image datasets.
\end{abstract}
\section{Introduction}
Clustering is the task of  partitioning a dataset into clusters such that data points within the same cluster are similar to each other, and data points from different clusters are different to each other. It is applicable to any set of data for which there is a notion of similarity between data points. It requires no prior knowledge, neither the explicit labels of supervised learning nor the knowledge of expected symmetries and invariances leveraged in self-supervised learning. 

The result of a successful clustering is a means of describing data in terms of the cluster that they belong to. This is a ubiquitous feature of human cognition. For example, we hear a sound and think of it as an utterance of the word ``water'', or we see a video of a biomechanical motion and think of it as a jump.  This can be further refined among experts, so that a musician could describe a musical phrase as an English cadence in A major, or a dancer could describe a snippet of ballet as a right-leg fouette into arabesque. When clustering high-dimensional data, the curse of dimensionality \cite{bellman1966dynamic} means that many classic algorithms, such as k-means \cite{lloyd1957least} or expectation maximization \cite{dempster1977maximum}, perform poorly. The Euclidean distance, which is the basis for the notion of similarity in the Euclidean space, becomes weaker in higher dimensions \cite{zimek2012survey}. Several solutions to this problem have been proposed. In this paper, we consider those termed deep clustering.

Deep clustering is a set of techniques that use a DNN to encode the high-dimensional data into a lower-dimensional feature space, and then perform clustering in this feature space. A major challenge is the training of the encoder. Much of the success of DNNs as image feature extractors (including \cite{krizhevsky2012imagenet,zeiler2014visualizing}) has been in supervised settings, but if we already had labels for our data, then there would be no need to cluster in the first place. There are two common approaches to training the encoder. The first is to use the reconstruction loss from a corresponding decoder, i.e., to train it as an autoencoder \cite{zemel1994developing}. The second is to design a clustering loss, so that the encoding and the clustering are optimized jointly. Both are discussed further in Section \ref{sec:related-work}. 

Our model, \emph{selective pseudo-label clustering} (\emph{SPC}), combines reconstruction and clustering loss. It uses an ensemble to select different loss functions for different data points, depending on how confident we are in their 
predicted~clusters. 

Ensemble learning is a 
function approximation where multiple approximating models are trained, and then the results are combined. 
Some variance across the ensemble is required. If all individual approximators were identical, there would be no gain in combining them. For ensembles composed of DNNs, variance is ensured by the random initializations of the weights and stochasticity of the training dynamics. In the simplest case, the output of the ensemble is the average of each individual output (mean for regression and mode for classification) \cite{opitz1997empirical}.

When applying an ensemble to clustering problems (referred to as consensus clustering; see \cite{boongoen2018cluster} for a comprehensive discussion), the sets of cluster labels must be aligned across the ensemble. This can be performed efficiently using the Hungarian algorithm. SPC considers a clustered data point to be confident if it received the same cluster label (after alignment) in each member of the ensemble. The intuition is that, due to random initializations and stochasticity of training, there is some non-zero degree of independence between the different sets of cluster labels, so the probability that all cluster labels are incorrect for a particular point is less than the probability that a single cluster label is incorrect.

Our main contributions are briefly summarized as follows.

\begin{itemize}
    \item[--] We describe a generally applicable deep clustering method (SPC), which treats cluster assignments as pseudo-labels, and introduces a novel technique to increase the accuracy of the pseudo-labels used for training. This produces a better feature extractor, and hence a more accurate clustering. 
    \item[--] We formally prove the advantages of SPC, given some simplifying assumptions. Specifically, we prove that our method does indeed increase the accuracy of the targets used for pseudo-label training, and this increase in accuracy does indeed lead to a better clustering performance. 
    \item[--] We implement SPC for  image clustering, 
    with a state-of-the-art performance on three popular image clustering datasets, and we present ablation studies on its main components. 
\end{itemize}

The rest of this paper is organized as follows. Section~\ref{sec:related-work} gives an overview of related work. Sections \ref{sec:method} and \ref{sec:proof} give a detailed description of SPC and a proof of correctness, respectively. Section \ref{sec:results} presents and discusses our experimental results, including a comparison to existing image clustering models and ablation studies on main components of SPC. Finally, Section \ref{sec:conclusion} summarizes our results and gives  an outlook on future work. Full proofs and further details are in the~appendix.
\section{Related Work} \label{sec:related-work}
One of the first deep image clustering models was \cite{huang2014deep}. 
It trains an autoencoder (AE) on reconstruction loss (rloss), and then clusters in the latent space, using loss terms to make the latent space more amenable to clustering.

In \cite{yang2017towards}, the training of the encoder is integrated with the clustering. A~second loss function is defined as the distance of each encoding to its assigned centroid. It then alternates between updating the encoder and clustering by k-means. A~different differentiable loss is proposed in \cite{xie2016unsupervised}, based on a soft cluster assignment using Student's $t$-distribution. The method pretrains an AE on rloss, then, like \cite{yang2017towards}, alternates between assigning clusters and training the encoder on cluster loss. Two slight modifications were made in later works: use of rloss after pretraining in \cite{guo2017improved} and regularization to encourage equally-sized clusters in \cite{Dizaji_2017_ICCV}.

This alternating optimization is replaced in \cite{gao2020deep} by a clustering loss that allows cluster centroids to be optimized directly by gradient descent. 

Pseudo-label training is introduced by \cite{caron2018deep}. Cluster assignments are interpreted as pseudo-labels, which are then used to train a multilayer perceptron on top of the encoding DNN, training alternates between clustering encodings, and treating these clusters as labels to train the encoder.


Generative adversarial networks \cite{goodfellow2014generative} (GANs) have produced impressive results in image synthesis \cite{Karras_2019_CVPR,elgammal2017can,brock2018large}. At the time of writing, the most accurate GAN-based image clustering models \cite{Mukherjee2019ClusterGANL,ding2019clustering}  design a generator to sample from a latent space that is the concatenation of a multivariate normal vector and a categorical one-hot encoding vector, then recover latent vectors for the input images as in \cite{creswell2018inverting,lipton2017precise}, and cluster the latent vectors. A similar idea is employed in \cite{jiang2016variational}, though not in an adversarial setting. For more details on GAN-based clustering, see \cite{Zhou_2018_CVPR,zhao2018gan,ding2019clustering,Liang_2018_ECCV,wang2019wegan} and the references therein.

Adversarial training is used for regularization in \cite{mrabah2019adversarial}. In \cite{mrabah2019deep}, the method is developed. Conflicted data points are identified as those whose maximum probability across all clusters is less than some threshold, or whose max and next-to-max are within some threshold of each other. Pseudo-label training is then performed on the unconflicted points only. A similar threshold-based filtering method is employed by \cite{chang2017deep}. 

A final model to consider is \cite{mcconville2019n2d}, which uses a second round (i.e., after the DNN) of dimensionality reduction via UMAP \cite{mcinnes2018umap}, before clustering.

\section{Method} \label{sec:method}
 Pseudo-label training is an effective deep clustering method, but training on only partially accurate pseudo-labels can hurt the encoder's ability to extract relevant features. Selective pseudo-label clustering (SPC) addresses this problem by selecting only the most confident pseudo-labels for training, using the four steps shown in~Fig.~\ref{fig:method-description}. 

\begin{figure*}[t]
    \centering
    \includegraphics[width=\textwidth]{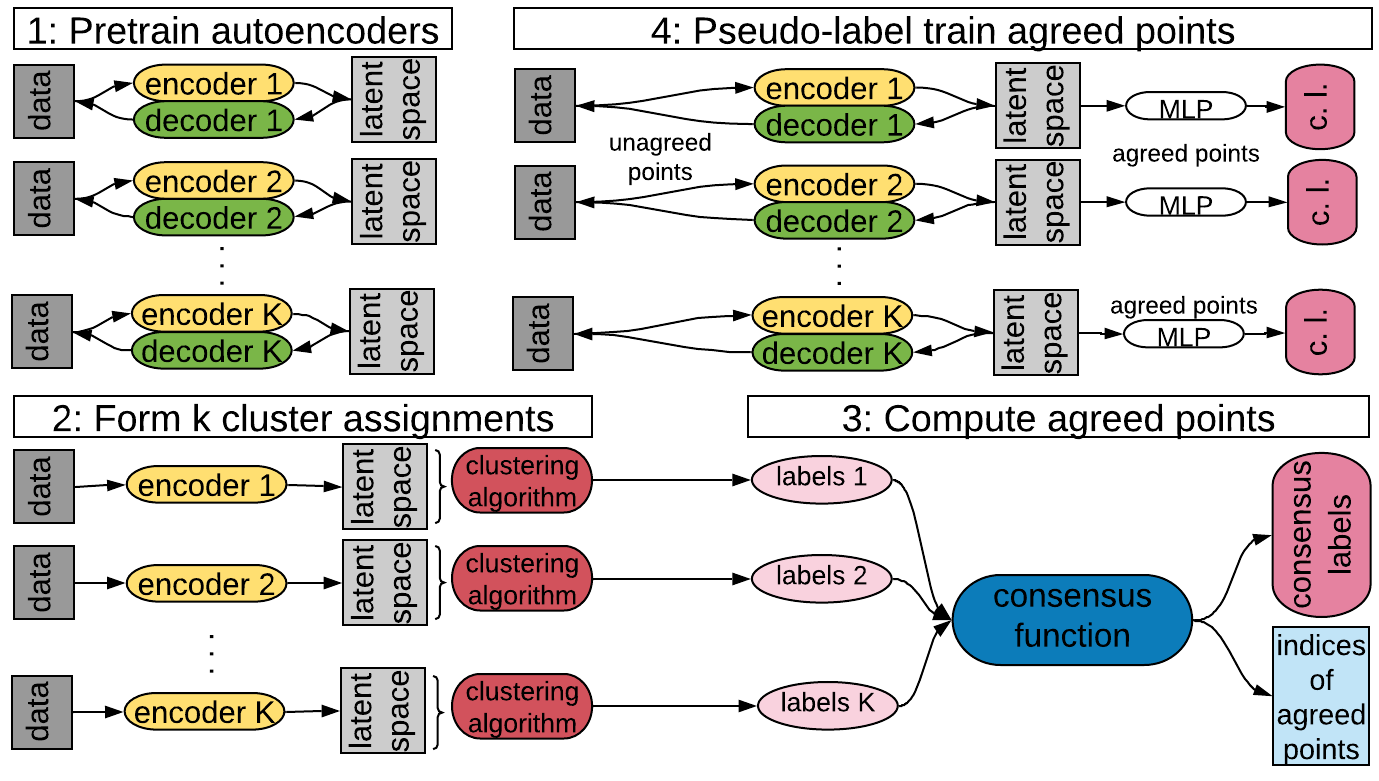}
    \caption{ \small The complete SPC method. (1) Pretrain autoencoders. (2)~Perform multiple clusterings independently. (3) Identify agreed points as those that receive the same label in all ensemble members. (4) Perform pseudo-label training on agreed points and autoencoder training on unagreed points. Steps (2)--(4) are looped until the number of agreed points stops increasing.}
    \label{fig:method-description}
    \vspace{-3ex}
\end{figure*}

\begin{enumerate}
    \item Train $K$ autoencoders in parallel.
    \item Cluster in the latent space of each, to obtain~$K$ sets of pseudo-labels.
    \item Select for pseudo-label training, those points are those that received the same label in all $K$ sets of pseudo-labels, after the labellings have been aligned using the Hungarian algorithm.
    \item Train on the selected pseudo-labels. Go back to (2).
\end{enumerate} 
Training ends when the number of agreed points stops increasing. Then, each data point is assigned  its most common cluster label across the (aligned) ensemble.

\subsection{Formal Description} \label{subsec:formal-description}
Given a dataset $\mathcal{X} \subseteq \mathbb{R}^n$ of size $N$ with $C$ true clusters, let $(f_j)_{1 \leq j \leq K},  f_j:\mathbb{R}^n \rightarrow \mathbb{R}^m$, and $(g_j)_{1 \leq j \leq K}, g_j:  \mathbb{R}^m \rightarrow \mathbb{R}^n$ be the $K$ encoders and decoders, respectively. 
Let $\psi: \mathbb{R}^{N \times m} \rightarrow \{0,\dots,C-1\}^N$ be the clustering function, which takes the $N$ encoded data points as input, and returns a cluster label for each. We refer to the output of $\psi$ as a labelling. Let $\Gamma: \{0,\dots,C-1\}^{K \times N} \rightarrow \{0,\dots,C-1\}^N \times \{0,1\}^N$ be the consensus function, which aggregates $K$ different labellings of $\mathcal{X}$ into a single labelling, and also returns a Boolean vector indicating agreement. Then, 
\begin{equation} \label{eq:ensemble}
{\small(c_1, \dots, c_N), (a_1, \dots, a_N) = \Gamma(\psi(f_1(\mathcal{X})) \circ \cdots \circ \psi(f_K(\mathcal{X}))),}
\end{equation}
where $(c_1, \dots, c_N)$ are the consensus labels, and $a_i\,{ =}\, 1$ if the $i$-th data point received the same cluster label (after alignment) in all labellings, and~$0$ otherwise. The consensus function is the ensemble mode average, $c_i$ is the cluster label that was most commonly assigned to the $i$-th data point. 

Define $K$ pseudo-classifiers \mbox{$(h_j)_{1 \leq j \leq K}, h_j: \mathbb{R}^m \rightarrow \mathbb{R}^C$}, and let 
\begin{gather} \label{eq:combined}
\mathcal{L} = \frac{1}{N}\sum_{i=1}^N\sum_{j=1}^K 
\begin{cases} 
CE(h_j(f_j(x_i)), c_i) & a_i = 1 \\
||g_j(f_j(x_i)) - x_i|| & \text{otherwise,}
\end{cases} 
\end{gather}
where $CE$ denotes categorical cross-entropy:
\begin{gather*}
CE: \mathbb{R}^C \times \{0, \dots, C-1\} \rightarrow \mathbb{R} \\
CE(x,n) = -\log (x[n]) \,.
\end{gather*}

\begin{algorithm}[t]
\begin{algorithmic}
\caption {Training algorithm for SPC} \label{alg:method}
\FOR {$j=1,\dots, K$}
    \STATE Update parameters of $f_j$ and $g_j$ using autoencoder reconstruction
\ENDFOR
\WHILE {number of agreed points increases}
    \STATE compute $(c_1, \dots, c_N), (a_1, \dots, a_N)$ as in \eqref{eq:ensemble}
    \FOR {$j=1,\dots, K$}
        \STATE Update parameters of $f_j$ and $h_j$ to minimize \eqref{eq:combined}
    \ENDFOR
\ENDWHILE
\end{algorithmic}
\end{algorithm}

\noindent First, we pretrain the autoencoders, then compute $(c_1, \dots, c_N), (a_1, \dots, a_N)$ and minimize $\mathcal{L}$, recompute, and iterate until the number of agreed points stops increasing. The method is summarized in Algorithm \ref{alg:method}.

Figure \ref{fig:training-dynamics} shows the training dynamics. Agreed points are those that receive the same cluster label in all members of the ensemble. As expected, the agreed points' accuracy is higher than the accuracy on all points. Initially, the agreed points will not include those that are difficult to cluster correctly, such as an MNIST digit 3 that looks like a digit 5. Some ensemble members will cluster it as a 3 and others as a 5. The training process aims to make these difficult points into agreed points, thus increasing the fraction of agreed points, without decreasing the agreed points' accuracy. Figure \ref{fig:training-dynamics} shows that this aim is achieved. As more points become agreed (black dotted line), the total accuracy approaches the agreed accuracy. The agreed accuracy remains high, decreasing only very slightly (blue line). The result is that the total accuracy increases (orange line). We end training when the number of agreed points plateaus. 

\begin{figure}[t]
    \centering
    \includegraphics[width=.65\textwidth]{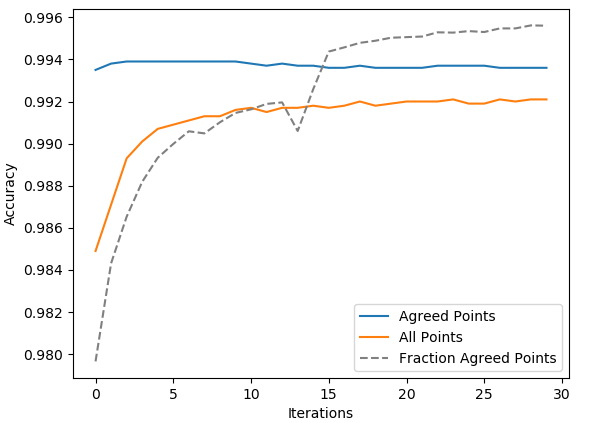}\vspace*{-1ex}
    \caption{ \footnotesize Iterations of (2)--(4) in Figure \ref{fig:method-description} on MNIST.}
    \label{fig:training-dynamics}\vspace*{-2ex}
\end{figure}

\subsection{Implementation Details}
Encoders are stacks of convolutional and batch norm layers; decoders of transpose convolutional layers. Decoders have a $tanh$ activation on their output layer, all other layers use leaky ReLU. The MLP pseudo-classifier has a hidden layer of size $25$. The latent space of the autoencoders has the size $50$ for MNIST and FashionMNIST, and $20$ for smaller USPS. We inject noise from a multivariate normal into the latent space as a simple form of regularization. As suggested in \cite{zhao2015loss}, the reconstruction loss is $\ell_1$. The architectures are the same across the ensemble, diversity comes from random initialization and training dynamics.

The clustering function ($\psi$ above) is a composition of UMAP \cite{mcinnes2018umap} and either HDBSCAN \cite{mcinnes2017hdbscan} or a Gaussian mixture model (GMM). As in previous works, we set the number of clusters to the ground truth. UMAP uses the parameters suggested in the clustering documentation clustering, $n\_neighbours$ is $30$ for MNIST and scaled in proportion to the dataset size for the others. HDBSCAN uses all default parameters. We cut the linkage tree at a level that gives the correct number of clusters. On the rare occasions when no such cut can be found, the clustering is excluded from the ensemble. The GMM uses all default parameters.

Consensus labels are taken as the most common across the ensemble, after alignment with the Hungarian algorithm (called the ``direct'' method in \cite{boongoen2018cluster}).
\section{Proof of Correctness} \label{sec:proof}
Proving correctness requires proving that the expected accuracy of the agreed pseudo-labels is higher than that of all pseudo-labels, and that training with more accurate pseudo-labels makes the latent vectors easier to cluster correctly.

\subsection{Agreed Pseudo-Labels are More Accurate}
Given that each member of the ensemble is initialized independently at random, and undergoes different stochastic training dynamics, we can assume that each cluster assignment contains some unique information. Formally, there is strictly positive conditional mutual information between any one assignment in the ensemble and the true cluster labels, conditioned on all the other assignments in the ensemble. From this assumption, the reasoning proceeds as follows.

Choose an arbitrary data point $x_0$ and cluster $c_0$. Let $X$ be a random variable (r.v.), indicating the true cluster of $x_0$, given that $n$ members of the ensemble have assigned it to $c_0$, and other assignments are unknown, $n\geq 0$. Thus, the event $X=c_0$ is the event that $x_0$ is correctly clustered. Let $Y$ be a Boolean r.v.\ indicating that the \mbox{$(n+1)$-th} member of the ensemble also assigns it to $c_0$. Assume that, if $n$ ensemble members have assigned $x_0$ to $c_0$, and other assignments are unknown, then $x_0$ belongs to $c_0$ with probability at least $1/C$ and belongs to all other clusters with equal probability, i.e.,  
\begin{gather*}
    p(X=c_0) = t \\
    \forall c \neq c_0, p(X=c) = {(1-t)}/{(C-1)}\,,
\end{gather*}
for some $1/C \leq t \leq 1$. 
It follows that the entropy $H(X)$ is a strictly decreasing function of $t$ (see appendix for proof). 
Thus, the above assumption on conditional mutual information, written $I(X;Y) > 0$, is equivalent to $p(X\,{=}\,c_0 | Y) > p(X\,{=}\,c_0)$. This establishes that the accuracy of the agreed labels is an increasing function of ensemble size. Standard pseudo-label training uses $n\,{=}\,1$, whereas SPC uses $n>1$ and so results in more accurate pseudo-labels for training.

\subsection{Increased Pseudo-Label Accuracy Improves Clustering}

\subsubsection{Problem Formulation.}
Let $\mathcal{D}$ be a dataset of i.i.d.~points from a distribution over $\mathcal{S} \in \mathbb{R}^n$, where $\mathcal{S}$ contains $C$ true clusters $c_1, \dots, c_C$. Let $T$ be the r.v.defined by the identity function on~$S$ and $f:\mathcal{S} \rightarrow \mathbb{R}^m$, an encoding function parametrized by $\theta$, whose output is an r.v. $X$. The task is to recover the true clusters conditional on $X$, and we are interested in choosing $\theta$ such that this task is as easy as possible. Pseudo-label training applies a second function $h: \mathbb{R}^m \rightarrow \{0, \dots, C-1\}$ and trains the composition $h \circ f: \mathbb{R}^n \rightarrow  \{0, \dots, C-1\}$ using gradient descent (g.d.), with cluster assignments as pseudo-labels. The claim is that an increased pseudo-label accuracy facilitates a better choice of $\theta$. 

 To formalize ``easy'', recall the definition of clustering as a partition that minimizes intra-cluster variance and maximizes inter-cluster variance. We want the same property to hold of the r.v. $X$. Let $y:\mathcal{D} \rightarrow \{0, \dots, C-1\}$ be the true cluster assignment function and $Y$ the corresponding random variable, then ease of recovering the true clusters is captured by a high value of $d$, where
\[
d =  \Var(\mathbb{E}[X|Y]) - \mathbb{E}[\Var(X|Y)] \,.
\]
High $d$ means that a large fraction of the variance of $X$ is accounted for by cluster assignment, as, by Eve's law, we can decompose:
\begin{equation} \label{eve}
\Var{(X)} = \mathbb{E}[\Var(X|Y)] + \Var(\mathbb{E}[X|Y]) \,,
\end{equation}

In the following, we assume that $f$ and $g$ are linear, $C=2$, $h \circ f (\mathcal{D}) \subseteq (0,1)$, and $\mathbb{E}[T] = \vec{0}$. The proof proceeds by expressing the value of $d$ in terms of expected distances between encoded points after a training step with correct labels and with incorrect labels, and hence proving that the value is greater in the former case. We  show that the expectation is greater in each coordinate, from which the claim follows by linearity (see appendix for details).

\begin{lemma} \label{lemma:smaller-dist-if-same}
    Let $x,  x' \in \mathcal{D}$ be two data points, and consider the expected squared distance between their encodings under~$f$. Let $u_{\mathit{same}}$ and $u_{\mathit{diff}}$ denote the value of this difference after a g.d. update in which both labels are the same and after a step in which both labels are different, respectively. Then, $u_{\mathit{same}} < u_{\mathit{diff}}$. 
\end{lemma}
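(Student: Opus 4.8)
The plan is to carry out an explicit first-order (small learning-rate) analysis of a single gradient-descent step on the two-point cross-entropy loss, and to show that the step drives the expected squared distance between the two encodings down more when the labels agree than when they differ. First I would instantiate the simplified model: since $f$ is linear and $\mathbb{E}[T]=\vec 0$, write $f(x)=Wx$ with $W\in\mathbb{R}^{m\times n}$; and since $C=2$ and $h\circ f(\mathcal{D})\subseteq(0,1)$, take the head to be $p(x)=\sigma(w^\top W x)$ for a weight vector $w$ and logistic $\sigma$, so that $CE$ is the binary cross-entropy. The key observation making the computation tractable is that the encoding $f(x)$ depends only on $W$, so only the update to $W$ affects the distance and the update to $w$ can be ignored here.

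Next I would differentiate the two-point loss $\mathcal{L}=CE(p(x),\ell)+CE(p(x'),\ell')$ with respect to $W$. Using $\partial CE/\partial(\text{logit})=p-\ell$ and $\partial(\text{logit})/\partial W=w\,x^\top$, the gradient is $G=(p(x)-\ell)\,w x^\top+(p(x')-\ell')\,w(x')^\top$. Writing $W'=W-\eta G$ and $\delta=x-x'$, I would expand
\[
\|W'\delta\|^2=\|W\delta\|^2-2\eta\,(W\delta)^\top(G\delta)+\eta^2\|G\delta\|^2 .
\]
A short computation gives $G\delta=(\alpha s+\beta t)\,w$ with $\alpha=p(x)-\ell$, $\beta=p(x')-\ell'$, $s=x^\top\delta$, and $t=(x')^\top\delta$, so the first-order change collapses to the clean scalar $-2\eta(\alpha s+\beta t)(w^\top W\delta)$, where $w^\top W\delta$ is the current logit gap $a(x)-a(x')$. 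Since the pre-update distance $\|W\delta\|^2$ is common to both scenarios, comparing $u_{\mathit{same}}$ and $u_{\mathit{diff}}$ reduces to comparing the expectations of this first-order term (for $\eta$ small, where it dominates the $\eta^2$ remainder).

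Finally I would take the expectation over $x,x'$ drawn i.i.d.\ from the data distribution, using $\mathbb{E}[T]=\vec 0$ and the exchangeability of the two draws to eliminate the odd cross terms, and invoke the symmetry of the binary problem to average over which class the shared label takes. This is where the per-coordinate reduction enters: I would show that in each coordinate the expected squared separation is pushed down more in the same-label case, then sum over coordinates by linearity. The governing intuition is that the step moves each encoding along $\pm w$ by an amount set by its residual $\alpha,\beta$; when $\ell=\ell'$ the residuals share a sign and the encodings are dragged the same way, shrinking their separation, whereas when $\ell\neq\ell'$ they are dragged apart.

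I expect the genuine obstacle to be precisely this last step. The factors $\alpha,\beta,s,t$ and the logit gap $a(x)-a(x')$ are all correlated through the shared parameters and the inner products $\|x\|^2$, $\|x'\|^2$, and $x^\top x'$, so signing $\mathbb{E}\!\left[(\alpha s+\beta t)(a(x)-a(x'))\right]$ is not a pointwise matter: it requires the mean-zero assumption to annihilate the odd-order cross terms and the binary symmetry to pin down the residual signs in each scenario. A secondary technical point is justifying that the $\eta^2$ term cannot overturn the comparison, which I would handle by restricting to a sufficiently small step size so that the linear-in-$\eta$ difference dominates.
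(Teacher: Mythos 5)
There is a genuine gap here, and it is fatal to the structure of your argument rather than just a missing computation. Your plan is to compare $u_{\mathit{same}}$ and $u_{\mathit{diff}}$ through the term linear in $\eta$ and to dismiss the $\eta^2$ term as a remainder that ``cannot overturn the comparison'' for small step size. Under the paper's assumptions ($\mathbb{E}[T]=\vec 0$, $x,x'$ i.i.d.), the first-order terms carry no usable signal, and the inequality asserted by the lemma is entirely a \emph{second-order} phenomenon. Concretely, in the per-coordinate setting the same-label update changes the gap to $w^T(x-x') - \eta w'(\|x\|^2-\|x'\|^2)$ and the different-label update changes it to $w^T(x-x') - \eta w'\|x-x'\|^2$. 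The first-order contribution to $u_{\mathit{diff}}$ is $-2\eta w'\,\mathbb{E}[(w^T(x-x'))\,\|x-x'\|^2]$, which is \emph{exactly zero}: the factor $w^T(x-x')$ is antisymmetric under swapping $x\leftrightarrow x'$ while $\|x-x'\|^2$ is symmetric, so by exchangeability the expectation vanishes. The first-order contribution to $u_{\mathit{same}}$ is $-4\eta w'\,\mathbb{E}[(w^Tx)\|x\|^2]$, a third moment of the data distribution whose sign is uncontrolled (it vanishes for symmetric data and can take either sign for skewed data). So a first-order comparison yields either $0$ versus $0$, or $0$ versus a term of indeterminate sign --- it can never produce $u_{\mathit{same}} < u_{\mathit{diff}}$, and for skewed data it can even point the wrong way. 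What actually separates the two cases is that the same-label correction $\|x\|^2-\|x'\|^2$ has mean zero while the different-label correction $\|x-x'\|^2$ has strictly positive mean; upon squaring, this contributes $(\eta w')^2\,\mathbb{E}[\|x-x'\|^2]^2 > 0$ to $u_{\mathit{diff}}$ and nothing comparable to $u_{\mathit{same}}$. That is precisely the term the paper isolates by decomposing each expectation as $(\text{mean})^2$ plus variance, and it is the paper's strict lower bound on $u_{\mathit{diff}} - u_{\mathit{same}}$ --- a term quadratic in $\eta$. By shrinking $\eta$ and discarding $\eta^2$ terms, you erase the very effect you are trying to exhibit.

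A secondary difference: you retain the genuine sigmoid residuals $\alpha = p(x)-\ell$, $\beta = p(x')-\ell'$, whereas the paper works with an idealized update in which the residuals are constants (each point contributes a gradient proportional to $w'x$), which is what makes all expectations computable in closed form. You correctly identify signing $\mathbb{E}[(\alpha s+\beta t)(w^TW\delta)]$ as the crux, but the tools you propose (mean-zero plus binary symmetry) cannot resolve it: as shown above, even in the constant-residual case that expectation is zero (different labels) or an indeterminate third moment (same labels). The repair is not a sharper first-order argument but an exact computation in all orders of $\eta$, following the paper's mean-plus-variance decomposition, ideally after adopting the paper's simplified residuals.
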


If $w \in \mathbb{R}^{m}$ and $w' \in \mathbb{R}$ are, respectively, the vector of weights mapping the input to the $i$-th coordinate of the latent space, and the scalar mapping the $i$-th coordinate of the latent space to the output, then the expected squared distance in the $i$-th coordinate of the latent vectors before the g.d. update is 
\[
\underset{x, x' \sim T}{\mathbb{E}}[(w^Tx - w^Tx')^2] = \underset{x, x' \sim T}{\mathbb{E}}[(w^T(x - x'))^2] \,.
\]
When the two labels are the same, assume w.l.o.g.~that $y = y' = 0$. Then, with step size $\eta$ , the update for $w$ and following expected squared difference $u_{same}$ is 
\begin{align*} 
&w \gets w - \eta (w'(x+x')) \\
u_{\mathit{same}} =& \underset{x, x' \sim T}{\mathbb{E}}[((w - \eta w'(x+x'))^T(x - x'))^2] \\
=& \underset{x, x' \sim T}{\mathbb{E}}[(w^T(x - x') - \eta w'(||x||^2-||x'||^2))^2]  \,.
\end{align*}
When the two labels are different, assume w.l.o.g.~that $y = 0$, $y' = 1$, giving
\begin{gather*} 
w \gets w - \eta (w'(x-x')) \\
u_{\mathit{diff}} = \underset{x, x' \sim T}{\mathbb{E}}[((w - \eta (w'(x-x'))^T)(x - x')])^2] = \\
\underset{x, x' \sim T}{\mathbb{E}}[(w^T(x - x') - \eta w'||x-x'||^2)^2]  \,.
\end{gather*} 
It can then be shown (see appendix) that $u_{\mathit{same}} < u_{\mathit{diff}}$.

\begin{lemma} \label{lemma:equal-distance-from-3rd}
    Let $z$ be a third data point, $z \in \mathcal{D}, z \neq x, x'$, and consider the expected squared distance of the encodings, under $f$, of $x$ and $z$. Let $v_{\mathit{same}}$ and $v_{\mathit{diff}}$ denote, respectively, the value of this difference after a g.d. update with two of the same labels, and with two different labels. Then, $v_{\mathit{same}} = v_{\mathit{diff}}$.
\end{lemma}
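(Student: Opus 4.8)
The plan is to mirror the coordinate-wise analysis used for Lemma~\ref{lemma:smaller-dist-if-same}, but to exploit the fact that the third point $z$ is statistically independent of the point $x'$ whose label distinguishes the two update cases. Fix the $i$-th latent coordinate, with input weights $w \in \mathbb{R}^m$ and output weight $w' \in \mathbb{R}$, and recall that the same-label update is $w \gets w - \eta w'(x+x')$ while the different-label update is $w \gets w - \eta w'(x-x')$. Writing $\tilde w_{\pm} = w - \eta w'(x \pm x')$ for the two post-update weight vectors, the quantities to compare are
\[
v_{\mathit{same}} = \underset{x,x',z\sim T}{\mathbb{E}}[(\tilde w_+^T(x-z))^2], \qquad v_{\mathit{diff}} = \underset{x,x',z\sim T}{\mathbb{E}}[(\tilde w_-^T(x-z))^2],
\]
with $x, x', z$ drawn independently.

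First I would expand each square as a quadratic in the update, obtaining three groups of terms: a term $(w^T(x-z))^2$ independent of the sign choice, a linear term proportional to $(x \pm x')^T(x-z)\,w^T(x-z)$, and a quadratic term proportional to $((x \pm x')^T(x-z))^2$. The crucial observation is that $v_{\mathit{same}}$ and $v_{\mathit{diff}}$ differ only through the substitution $x' \mapsto -x'$; hence their difference is exactly the sum of all contributions that are odd (in fact, linear) in $x'$. Concretely, the linear term contributes $\mp 2\eta w'\,(x')^T(x-z)\,w^T(x-z)$, and the cross term inside the quadratic contributes $\pm 2\eta^2 w'^2\,x^T(x-z)\,(x')^T(x-z)$; every remaining term contains $x'$ to an even power and is therefore identical in the two cases.

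The key step is then to show that each term linear in $x'$ vanishes in expectation. Because $z \neq x, x'$ and all points are i.i.d.\ samples of $T$, the variable $x'$ is independent of the pair $(x,z)$, so any expectation of the form $\mathbb{E}[\,(x')^T(x-z)\cdot \phi(x,z)\,]$ factors as $\mathbb{E}[x']^T\,\mathbb{E}[(x-z)\,\phi(x,z)]$. Since $\mathbb{E}[x'] = \mathbb{E}[T] = \vec{0}$ by assumption, every such expectation is zero. Consequently the odd-in-$x'$ contributions cancel and $v_{\mathit{same}} = v_{\mathit{diff}}$ in the $i$-th coordinate; summing over coordinates by linearity gives the stated equality.

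I expect the main obstacle to be bookkeeping rather than conceptual: one must verify that no term even in $x'$ secretly differs between the two cases (it does not, since the sign flip only touches odd powers), and one must keep track of the fact that $x$ appears simultaneously in the weight update and in the displacement $x-z$, producing mixed terms in $x$ that survive but are common to both cases. The genuine content---absent in Lemma~\ref{lemma:smaller-dist-if-same}, where the distance was measured between $x$ and $x'$ themselves---is that replacing $x'$ by an independent third point $z$ decouples the label-dependent perturbation from the measured displacement, so the asymmetry that produced a strict inequality before now integrates to zero.
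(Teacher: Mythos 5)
Your proposal is correct and follows essentially the same route as the paper's proof: both isolate the contribution that is linear in $x'$ (the paper via the factorization $a^2-b^2=(a+b)(a-b)$, you via direct expansion and parity bookkeeping in $x'$), and both kill it using the independence of $x'$ from $(x,z)$ together with $\mathbb{E}[T]=\vec{0}$. The difference is purely presentational, so no further comparison is needed.
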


\begin{lemma} \label{lemma:same-diff-decomp}
    Let $s$ and $r$ denote, respectively, the expected squared distance between the encodings, under $f$, of two points in the same cluster and between two points in different clusters. Then, there exist $\lambda_1, \lambda_2 > 0$ whose values do not depend on the parameters of $f$, such that $d = \lambda_1r - \lambda_2s$.
\end{lemma}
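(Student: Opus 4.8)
The plan is to express all three quantities $d$, $s$, and $r$ in terms of a common set of primitive statistics of the encoded distribution — the within-cluster variances and the separation between the two cluster means — and then invert these relations. Following the remark preceding the lemma, I would argue coordinate-wise, treating $X$ as scalar, since the full $m$-dimensional identity then follows by summing an identical decomposition over coordinates. Write $p_k = P(Y=k)$, $\mu_k = \mathbb{E}[X \mid Y = k]$, and $\sigma_k^2 = \Var(X \mid Y = k)$ for $k \in \{0,1\}$. Because $f$ is linear and $\mathbb{E}[T] = \vec{0}$, we have $\mathbb{E}[X] = 0$, so the cluster means satisfy $\sum_k p_k \mu_k = 0$; under the natural balanced-cluster assumption $p_0 = p_1 = \tfrac12$ this gives $\mu_1 = -\mu_0$.

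First I would rewrite $d$ in these terms. Since $\Var(\mathbb{E}[X\mid Y]) = \sum_k p_k \mu_k^2$ and $\mathbb{E}[\Var(X \mid Y)] = \sum_k p_k \sigma_k^2$, we obtain $d = \tfrac12(\mu_0^2 + \mu_1^2) - \tfrac12(\sigma_0^2 + \sigma_1^2) = \mu_0^2 - \tfrac12(\sigma_0^2+\sigma_1^2)$. Next I would compute $s$ and $r$: for two independent points in the same cluster $k$ the expected squared distance of their encodings is $2\sigma_k^2$, so $s = \sigma_0^2 + \sigma_1^2$; for one point from each cluster it is $\sigma_0^2 + \sigma_1^2 + (\mu_0 - \mu_1)^2$, so $r = s + 4\mu_0^2$. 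The final step is the inversion: $r - s = 4\mu_0^2$ isolates the between-cluster separation, whence $\mu_0^2 = \tfrac14(r-s)$, and substituting gives $d = \tfrac14(r-s) - \tfrac12 s = \tfrac14 r - \tfrac34 s$. Thus $\lambda_1 = \tfrac14$ and $\lambda_2 = \tfrac34$ work; both are strictly positive and manifestly independent of the parameters of $f$.

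The main obstacle I anticipate is not any hard estimate but the careful bookkeeping of the sampling conventions and weightings behind $s$, $r$, and $\mathbb{E}[\Var(X\mid Y)]$. The clean, $f$-independent constants appear precisely because the within-cluster variances $\sigma_0^2, \sigma_1^2$ enter all three of $d$, $s$, and $r$ with compatible weights; this alignment is automatic for balanced clusters but, for unequal cluster probabilities, would leave coefficients depending on the $p_k$ (still independent of $f$, as required, but no longer the universal constants above). I would therefore make the balanced-cluster assumption explicit and verify that the weightings match, since this is the one place the decomposition could fail if the conditional distributions were handled loosely. The passage to $m$ coordinates is then routine, as $d$, $s$, and $r$ all decompose additively across coordinates with the same $\lambda_1, \lambda_2$.
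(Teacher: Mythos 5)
Your proof is correct, and it lands exactly on the paper's constants: Lemma~\ref{lemma:same-diff-decomp} in the paper gives $d = \frac{C-1}{2C}r - \frac{2C-1}{2C}s$, which at $C=2$ is your $\lambda_1 = \frac14$, $\lambda_2 = \frac34$. The route is genuinely different, however. The paper never introduces the cluster means: it starts from the i.i.d.-pair identity $\Var(X) = \frac{1}{2}\,\mathbb{E}_{x,x'}[(x-x')^2]$, splits that expectation according to whether $y(x)=y(x')$ to obtain $\Var(X) = \frac{1}{2}\bigl(\frac{1}{C}s + \frac{C-1}{C}r\bigr)$ for equally sized clusters, and then combines the observation $s = 2\,\mathbb{E}[\Var(X\mid Y)]$ with Eve's law to write $d = \Var(X) - s$; substituting one identity into the other finishes the proof uniformly in $C$, not just for $C=2$. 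Your parametrize-and-invert argument (express $d$, $s$, $r$ in terms of $\mu_k, \sigma_k^2$ and eliminate) is more explicit but tied to the balanced two-cluster case, which is fine since $C=2$ is the standing assumption of the section. One thing your version surfaces that the paper glosses over: the alignment of the $\sigma_k^2$-weights across $d$, $s$, and $r$ genuinely requires balanced clusters. Under the natural sampling convention (conditioning on ``same cluster'' weights cluster $k$ by $p_k^2$), a short computation for $C=2$, $p_0 \neq p_1$ shows that \emph{no} $f$-independent $\lambda_1, \lambda_2$ exist, so the paper's remark that the argument ``can easily be generalized to clusters of arbitrary sizes'' holds only if one changes the convention defining $s$ and $r$ (e.g., draw a cluster first, then two points from it). Your caveat is therefore well placed, although your guess that the coefficients would merely acquire a $p_k$-dependence is optimistic: with the conditional convention the linear form itself breaks.
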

For simplicity, assume that the clusters are equally sized. The argument can easily be generalized to clusters of arbitrary sizes. We then obtain 
\[
d = \frac{C-1}{2C}r - \frac{2C -1}{2C}s\,,
\]
where $C$ is the number of clusters (see appendix for proof).

\begin{definition}{\rm 
    Let $\tilde{y}:\mathcal{D} \rightarrow \{0, \dots, C-1\}$ be the pseudo-label assignment function. For  $d_i, d_j \in \mathcal{D}$, the pseudo-labels are \emph{pairwise correct} iff $y(x_i) = y(x_j)$ and $\tilde{y}(x_i) = \tilde{y}(x_j)$, or $y(x_i) \neq y(x_j)$ and $\tilde{y}(x_i) \neq \tilde{y}(x_j)$.}
\end{definition}


\begin{theorem}
    Let $d_{T}$ and $d_{F}$ denote, respectively, the value of $d$ after a g.d. step from two pairwise correct labels and from two pairwise incorrect labels, and let $x,x' \in \mathcal{D}$ as before. Then, $d_T > d_F$.
\end{theorem}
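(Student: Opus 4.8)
The plan is to reduce the comparison of $d_T$ and $d_F$ to the single pair $(x,x')$ on which the gradient step acts, and then to read off the sign of the difference directly from the decomposition of Lemma~\ref{lemma:same-diff-decomp}. First I would invoke Lemma~\ref{lemma:same-diff-decomp} to write $d = \lambda_1 r - \lambda_2 s$ with $\lambda_1,\lambda_2 > 0$, where $r$ and $s$ are the expected squared distances between different-cluster and same-cluster encodings. Since the gradient step enters $d$ only through its effect on these two quantities, it suffices to track how the step changes $r$ and $s$, and in particular how this change differs between the pairwise-correct and pairwise-incorrect scenarios.

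The crucial observation is that the induced update depends on the pseudo-labels only through whether they agree or disagree (the agreeing update $w \gets w - \eta w'(x+x')$ versus the disagreeing update $w \gets w - \eta w'(x-x')$), and that pairwise correctness is exactly the statement that this agreement matches the true same/different cluster relationship of $x$ and $x'$. I would then decompose $r$ and $s$ into the contribution of the pair $(x,x')$ itself and the contributions of pairs involving a third point. By Lemma~\ref{lemma:equal-distance-from-3rd}, the third-point contributions are identical in the two scenarios ($v_{\mathit{same}} = v_{\mathit{diff}}$) and hence cancel in $d_T - d_F$; the entire difference is carried by the $(x,x')$ term, which equals $u_{\mathit{same}}$ for an agreeing step and $u_{\mathit{diff}}$ for a disagreeing one, with $u_{\mathit{same}} < u_{\mathit{diff}}$ by Lemma~\ref{lemma:smaller-dist-if-same}.

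With this reduction the result follows by a two-case split on the true relationship of $x$ and $x'$. If $x$ and $x'$ lie in the same true cluster, the pair contributes to $s$: pairwise-correct labels agree and contribute $u_{\mathit{same}}$, while pairwise-incorrect labels disagree and contribute $u_{\mathit{diff}}$, so since $d$ carries $s$ with the negative coefficient $-\lambda_2$ and $u_{\mathit{same}} < u_{\mathit{diff}}$, we obtain $d_T > d_F$. If instead $x$ and $x'$ lie in different true clusters, the pair contributes to $r$: now pairwise-correct labels disagree (contributing $u_{\mathit{diff}}$) and pairwise-incorrect labels agree (contributing $u_{\mathit{same}}$), and since $d$ carries $r$ with the positive coefficient $\lambda_1$ and $u_{\mathit{diff}} > u_{\mathit{same}}$, again $d_T > d_F$. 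Both cases yield the claim.

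The main obstacle is justifying the cancellation step cleanly, i.e.\ that $(x,x')$ is genuinely the only pair whose contribution to $r$ and $s$ differs between the two scenarios. Lemma~\ref{lemma:equal-distance-from-3rd} disposes of pairs containing exactly one of $x,x'$, but one must also account for pairs containing neither; here the per-coordinate linearization together with $\mathbb{E}[T] = \vec{0}$ and the independence of distinct points forces the scenario-dependent cross-terms to vanish in expectation, so these pairs also contribute equally. Keeping the bookkeeping of which pairs feed $r$ versus $s$ consistent, and confirming that the signs of $\lambda_1$ and $-\lambda_2$ align with the direction of $u_{\mathit{same}} < u_{\mathit{diff}}$ in each case, is the delicate part of the argument.
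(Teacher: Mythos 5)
Your proposal is correct and follows essentially the same route as the paper's own proof: Lemma~\ref{lemma:same-diff-decomp} for the sign structure $d = \lambda_1 r - \lambda_2 s$, Lemma~\ref{lemma:equal-distance-from-3rd} to freeze all pairs involving a third point, and Lemma~\ref{lemma:smaller-dist-if-same} to settle the $(x,x')$ term in the same two-case split on $y(x) = y(x')$ versus $y(x) \neq y(x')$. If anything, you are slightly more careful than the paper, which only asserts in its appendix that pairs containing neither $x$ nor $x'$ contribute equally, whereas you flag this and sketch the justification via $\mathbb{E}[T]=\vec{0}$ and independence.
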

\begin{proof}
Let $r_T, s_T$, and $r_F, s_F$ be, respectively, the values of $r$ and $s$ after a g.d. step from two pairwise correct labels and from two pairwise incorrect labels.
Consider two cases. If~$y(x) = y(x')$, then $r_T = r_F$, by Lemma \ref{lemma:equal-distance-from-3rd}, and $s_T < s_F$, by Lemmas \ref{lemma:smaller-dist-if-same} and \ref{lemma:equal-distance-from-3rd}, so by Lemma \ref{lemma:same-diff-decomp}, $d_T > d_F$. If $y(x) \neq y(x')$, then $s_T = s_F$, by Lemma \ref{lemma:equal-distance-from-3rd}, and $r_T > r_F$, by Lemmas \ref{lemma:smaller-dist-if-same} and \ref{lemma:equal-distance-from-3rd}, so again $d_T > d_F$, by Lemma \ref{lemma:same-diff-decomp}.
\end{proof}
The fraction of pairwise correct pairs is one measure of accuracy (Rand Index). Thus, training with more accurate pseudo-labels facilitates better clustering.

  \section{Experimental Results} \label{sec:results}
\noindent Following previous works, we measure accuracy and normalized mutual information (NMI). Accuracy is computed by aligning the predicted cluster labels with the ground-truth labels using the Hungarian algorithm \cite{kuhn1955hungarian} and then calculating as in the supervised case. NMI, as in \cite{witten2002data}, is defined as 
${2I(\tilde{Y};Y)}/({H(\tilde{Y}) + H(Y)})$, 
where $\tilde{Y}$, $Y$, $I(\cdot,\cdot)$, and $H(\cdot)$ are, respectively, the cluster labels, ground truth labels, mutual information, and Shannon entropy. We report on two handwritten digits datasets, MNIST (size 70000) \cite{lecun1998gradient} and USPS (size 9298) \cite{hull1994database}, and FashionMNIST (size 70000) \cite{xiao2017fashion} of clothing items. Table \ref{tab:main-results} shows the central tendency for five runs and the best single run.

\begin{table*}[th]
 \centerline{\scalebox{0.91}{\begin{tabular}{@{}lllllll@{}}
    \toprule
    & \multicolumn{2}{c}{MNIST} & \multicolumn{2}{c}{USPS} & \multicolumn{2}{c}{FashionMNIST}        \\
    \cmidrule(r){2-3}  \cmidrule(r){4-5}  \cmidrule(r){6-7}
    & ACC & NMI & ACC & NMI & ACC & NMI \\
    
    \midrule
    
    SPC-best & \textbf{99.21} & \textbf{97.49} & \textbf{98.44} & \textbf{95.44} & \textit{67.94} & \textbf{73.48} \\ 
    
    SPC & \textbf{99.03 (.1)} & \textbf{97.04 (.25)} & \textbf{98.40 (.94)} & \textbf{95.42 (.15)} & 65.58 (2.09) & \textbf{72.09 (1.28)} \\ 
    
    SPC-GMM & \textbf{99.05 (.2)} & \textbf{97.10 (.47)} & \textbf{98.18 (.14)} & \textbf{94.93 (.32)} & 65.03 (1.54) & \textbf{69.51 (1.21)} \\ 
    
   \midrule
   
   DynAE \cite{mrabah2019deep}$\dagger$ & \textit{98.7} & \textit{96.4} & \textit{98.1} & 94.8 & 59.1 & 64.2 \\
    
    ADC \cite{mrabah2019adversarial}$\dagger$  & 98.6 & 96.1 & \textit{98.1} & 94.8 & 58.6 & 66.2\\
    
    DDC \cite{ren2020deep}$\dagger$ & 98.5 & 96.1 & 97.0 & \textit{95.3} & 57.0 & 63.2\\
    
    n2d \cite{mcconville2019n2d} & 97.9 & 94.2 & 95.8 & 90.0 &67.2 & \textit{68.4}\\
    
    DLS \cite{ding2019clustering}  & 97.5 & 93.6 & - & - & \textbf{69.3} & 66.9 \\
      
    JULE \cite{yang2016joint} & 96.4 & 91.3 & 95.0 & 91.3 & 56.3* & 60.8*\\
    
    DEPICT \cite{Dizaji_2017_ICCV}  & 96.5 & 91.7 & 96.4 & 92.7 & 39.2* & 39.2*\\
    
    DMSC \cite{abavisani2018deep}$\dagger$  & 95.15 & 92.09 & 95.15 & 92.09 & - & -\\
    
    ClusterGAN \cite{Mukherjee2019ClusterGANL}  & 95 & 89 & - & - & 63 & 64\\
    
    VADE \cite{jiang2016variational}  & 94.5 & 87.6* & 56.6* & 51.2* & 57.8* & 63.0*\\
    
    IDEC \cite{guo2017improved}  & 88.06 & 86.72 & 76.05 & 78.46 & 52.9* & 55.7*\\
    
    CKM \cite{gao2020deep}  & 85.4 & 81.4 & 72.1 & 70.7 & - & -\\
    
    DEC \cite{xie2016unsupervised}  & 84.3 & 83.4* & 76.2* & 76.7* & 51.8* & 54.6*\\
    
    DCN \cite{yang2017towards}  & 83 & 81 & 68.8* & 68.3* & 50.1* & 55.8*\\
    
   \bottomrule
  \end{tabular}}}  
  \vspace{1ex}
  \centerline{$\dagger = $uses data augmentation \hspace{10pt} *=results taken from \cite{mrabah2019deep}}
  \vspace{1.5ex}
\caption{\footnotesize Accuracy and NMI of SPC compared to other top-performing image clustering models. The best results are in bold, and the second-best are emphasized. We report the mean and standard deviation (in parentheses) for five runs. } \label{tab:main-results}
  \vspace{-4ex}
  \end{table*}

We show results for two different clustering algorithms: Gaussian mixture model and the more advanced HDBSCAN \cite{mcinnes2017hdbscan}. Both perform similarly, showing robustness to clustering algorithm choice. SPC-GMM performs slightly worse on USPS and FashionMNIST (though within margin of error), suggesting that HDBSCAN  may cope better with the more complex images in FashionMNIST and the smaller dataset in USPS. In Table \ref{tab:main-results}, `SPC' uses HDBSCAN.

SPC (using either clustering algorithm) outperforms all existing approaches for both metrics on MNIST and USPS, and for NMI on FashionMNIST. The disparity between the two metrics, and between HDBSCAN and GMM, on FashionMNIST is due to the variance in cluster size. Many of the errors are lumped into one large cluster, and this hurts accuracy more than NMI, because being in this large cluster still conveys some information about what the ground truth cluster label is (see appendix for full details).

The most accurate existing methods use data augmentation. This is to be expected, given the well-established success of data augmentation in supervised learning \cite{hinton2012improving}. More specifically, \cite{guo2018deep} have shown empirically that adding data augmentation to deep image clustering models improves performance in virtually all cases. Here, its effect is especially evident on the smaller dataset, USPS. For example, on MNIST, n2d \cite{mcconville2019n2d} (which does not use data augmentation) is only $0.6$ and $1.9$ behind DDC \cite{ren2020deep}, which does on ACC and NMI, respectively, but is $1.2$ and $5.3$ behind on USPS. SPC could easily be extended to include data augmentation, and even without using it, outperforms models that do.

\subsection{Ablation Studies}
\begin{table*}[!t]
   \medskip 
  \centering
  \footnotesize
  \begin{tabular}{@{}lllllll@{}}
    \toprule
    & \multicolumn{2}{c}{MNIST} & \multicolumn{2}{c}{USPS} & \multicolumn{2}{c}{FashionMNIST}        \\
    \cmidrule(r){2-3}  \cmidrule(r){4-5}  \cmidrule(r){6-7}
    & ACC & NMI & ACC & NMI & ACC & NMI \\
    \midrule
    
    SPC& \textbf{99.03 (.1)} & \textbf{97.04 (.25)} & \textbf{98.40 (.94)} & \textbf{95.42 (.15)} & \textbf{65.58 (2.09)} & \textbf{72.09 (1.28)} \\ 
    
    \midrule
   
    A1 & 98.01 (.04) & 94.46 (.11) & 97.03 (.65) & 92.43 (1.29) & 63.12 (.16) & 70.59 (.01) \\
    
    A2 & 98.18 (.05) & 94.86 (.09) & 97.31 (.89) & 92.99 (1.84) & 60.60 (4.45) & 68.77 (.48) \\
    
    A3 & 98.02 (.19) & 94.45 (.43) & 95.85 (.80) & 89.77 (1.65) & 59.23 (3.58) & 67.09 (3.77) \\
    
    A4 & 97.88 (.72) & 94.8 (.85) & 87.49 (7.93) & 82.68 (2.6) & 61.2 (4.28) & 67.28 (1.72) \\
    
    A5 & 96.17 (.26) & 91.07 (.23) & 87.00 (8.88) & 80.79 (7.43) & 55.29 (3.54) & 66.07 (1.04) \\
    
    A6 & 70.24 & 77.42 & 70.46 & 71.11 & 42.08 & 49.22 \\
     
   \bottomrule
  \end{tabular}
  \vspace{2ex}
  \caption{ \footnotesize Ablation results, central tendency for three runs. A1=w/o label filtering; A2=w/o label sharing; A3=w/o ensemble; A4=pseudo-label training only; A5=UMAP+AE; A6=UMAP. Both A1 and A2 train on all data points. The former trains each member of the ensemble on their own labels, and the latter uses the consensus labels. A3 sets $K=1$, in the notation of Section \ref{subsec:formal-description}.} \label{tab:ablation-results}
 \vspace{-4ex}
\end{table*}

\begin{table*}[t]
\centering
  \footnotesize
\vspace*{-0.5ex}
 \centering
  \footnotesize
  \begin{tabular}{@{}lllllll@{}}
    \toprule
    & \multicolumn{2}{c}{MNIST} & \multicolumn{2}{c}{USPS} & \multicolumn{2}{c}{FashionMNIST}        \\
    \cmidrule(r){2-3}  \cmidrule(r){4-5}  \cmidrule(r){6-7}
    & ACC & NMI & ACC & NMI & ACC & NMI \\
    \midrule
    25 & 98.48 & 95.60 & 97.70 & 93.82 & 67.67 & 73.25 \\
    20 & 98.49 & 95.64 & 97.87 & 94.21 & 67.52 & 73.13 \\
    15 & 99.03 (.10) & 97.04 (.25) & 98.40 (.94) & 95.42 (.15) & 65.58 (2.09) & 72.09 (1.28) \\
    12 & 98.82 & 96.54 & 98.20 & 95.02 & 67.77 & 73.13 \\
    10 & 98.78 & 96.42 & 98.39 & 95.47 & 62.93 & 69.89 \\
    8 & 98.75& 96.32 & 98.41 & 95.44 & 67.45 & 71.99 \\
    6 & 98.61 & 95.90 & 98.40 & 95.39 & 63.84 & 70.62 \\
    5 & 98.56 & 95.82 & 98.30 & 95.19 & 67.91 & 73.46 \\
    4 & 98.47 & 95.60 & 98.27 & 95.18 & 67.90 & 73.38 \\
    3 & 98.44 & 95.50 & 98.15 & 94.84 & 63.36 & 70.88 \\
    2 & 98.27 & 95.07 & 97.98 & 94.40 & 62.9 & 70.41 \\
    1 & 98.02 (.19) & 94.45 (.43) & 95.85 (.80) & 89.77 (1.65) & 59.23 (3.58) & 67.09 (3.77) \\
    \hline
    \end{tabular}\vspace*{2ex}
\caption{\footnotesize Ablation studies on the size of the ensemble.}
\vspace*{-5ex}
\label{tab:ensemble-size-ablation}\end{table*}

\noindent Table \ref{tab:ablation-results} shows the effect of removing each component of our model. All settings use HDBSCAN. Particularly relevant are rows 2 and 3. As described in Section~\ref{sec:method}, we produce multiple labellings of the dataset and select for pseudo-label training only those data points that received the same label in all labellings. We perform two different ablations on this method: A1 and A2. Both use all data points for training, but A1 trains each ensemble on all data points using the labels computed in that ensemble member, and A2 uses the consensus labels. At inference, both use consensus labels. The significant drop in accuracy in both settings demonstrates that the strong performance of SPC is not just due to the application of an ensemble to existing methods, but rather to the novel method of label selection. 

It is interesting to observe that A1 performs worse than A2 on MNIST and USPS. Combining approximations in an ensemble has long been observed to give higher expected accuracy (\cite{clemen1989combining,pearlmutter1991chaitin,perrone1993improving,breiman1996bagging}), so the training targets would be more accurate in A1 than in A2. We hypothesize that the reason that this fails to translate to improved clustering is a reduction in ensemble variance. On MNIST and USPS, high accuracy across the ensemble means high agreement. Giving the same training signal for every data point reduces variance further. Especially, compared with A2,  the reduction is greatest on incorrectly clustered data points, because most incorrectly clustered data points are non-agreed points, and as argued in \cite{kittler1998combining}, high ensemble variance in the errors is important for performance. 

A4 clusters in the latent space of one untrained encoder and then pseudo-label trains (essentially the method  in \cite{caron2018deep}). It performs significantly worse than SPC, showing the value of the decoder, and of SPC's label selection technique.

A3 omits the ensemble entirely. Comparing with A2 again shows that the ensemble itself only produces a small improvement. Alongside SPC's label selection method, the improvement is much greater.

\subsection{Ensemble Size}
 The number of autoencoders in the ensemble, $K$ in the terminology of Section \ref{subsec:formal-description}, is a hyperparameter. We add the concatenation of all latent spaces as an additional element. Table~\ref{tab:ensemble-size-ablation} shows the performance for smaller ensemble sizes. In MNIST and USPS, where the variance is reasonably small, there is a discernible trend of the performance increasing with $K$, then plateauing and starting to decrease. For FashionMNIST, where the variance is higher, the pattern is less clear. For all three datasets, however, we can see a significant difference between an ensemble of size two and an ensemble of size one (i.e., no ensemble). We hypothesize that the decrease for $K=20,25$ is due to a decrease in the number of agreed points, and so fewer pseudo-labels to train the encoders. 
 

\section{Conclusion} \label{sec:conclusion}
This paper has presented a deep clustering model, called selective pseudo-label clustering (SPC). SPC employs pseudo-label training, which alternates between clustering features extracted by a DNN, and treating these clusters as labels to train the DNN. We have improved this framework  with a novel technique for preventing the DNN from learning noise. The method is formally sound and achieves a state-of-the-art performance on three popular image clustering datasets. Ablation studies have demonstrated that the high accuracy is not merely the result of applying an ensemble to existing techniques, but rather is due to SPC's novel filtering method. Future work includes the application to other clustering domains, different from images, and an investigation of how SPC combines with existing deep clustering techniques. 

\medskip 
\noindent{\textbf{Acknowledgments. }}%
This work was supported by the Alan Turing Institute
under the UK EPSRC grant EP/N510129/1 and by the AXA
Research Fund.  We also acknowledge the use of
the EPSRC-funded Tier 2 facility JADE (EP/P020275/1) and
GPU computing support by Scan Computers International Ltd.

\bibliography{bibliography}

\newpage
\section{Appendix A: Full Proofs}
This appendix contains the full proofs of the results in Section~\ref{sec:proof}.

\subsection{More Accurate Pseudo-Labels Supplement}
The only part omitted from the argument in the main paper is a proof for the claim about the entropy of the random variable~$X$. This is supplied by the following proposition.
\begin{proposition}
Given a categorical random variable $X$ of the form 
\begin{gather*}
    p(X=c_0) = t \\
    \forall c \neq c_0, p(X=c) = \frac{1-t}{C-1}\,,
\end{gather*}
for some $1/C \leq t \leq 1$, the entropy $H(X)$ is a strictly decreasing function of $t$. 
\end{proposition}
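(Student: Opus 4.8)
The plan is to reduce the claim to a one-variable calculus statement by writing the entropy explicitly as a function of $t$ and showing its derivative is strictly negative on the relevant range. First I would collapse the $C-1$ equiprobable outcomes, each carrying probability $(1-t)/(C-1)$, into a single combined term. Since there are $C-1$ of them, this gives $H(t) = -t\log t - (1-t)\log(1-t) + (1-t)\log(C-1)$. Note that the base of the logarithm is irrelevant here, as it only rescales $H$ by a positive constant, so the monotonicity claim is base-independent.

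Next I would differentiate $H$ with respect to $t$ on the open interval $(1/C,1)$, where all logarithms have strictly positive arguments and $H$ is smooth. Treating the three summands separately (the first by the product rule, the second likewise after accounting for the chain-rule factor from $1-t$, the third trivially) yields $H'(t) = -\log t + \log(1-t) - \log(C-1)$, which I would combine into the single expression $H'(t) = \log\frac{1-t}{t(C-1)}$.

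The sign analysis is then immediate. Because $\log$ is increasing and vanishes at $1$, we have $H'(t) < 0$ exactly when $\frac{1-t}{t(C-1)} < 1$, i.e.\ when $1-t < t(C-1)$, i.e.\ when $tC > 1$, i.e.\ when $t > 1/C$. Hence $H'(t) < 0$ throughout the open interval $(1/C,1)$.

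The one subtlety, and the step I would treat most carefully, is the boundary behaviour, since at the left endpoint $t = 1/C$ the derivative is exactly $0$ (this is the uniform distribution, which maximizes entropy), so I cannot conclude strict decrease at that single point from $H' < 0$ alone. Instead I would observe that $H$ is continuous on the closed interval $[1/C,1]$ (using $(1-t)\log(1-t) \to 0$ as $t \to 1$) and apply the mean value theorem: for any $1/C \le a < b \le 1$ there is an interior $\xi \in (1/C,1)$ with $H(b) - H(a) = H'(\xi)(b-a)$, and since $H'(\xi) < 0$ we get $H(b) < H(a)$. This is precisely strict monotonic decrease on $[1/C,1]$, completing the argument.
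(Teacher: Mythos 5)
Your proof is correct, and while it follows the same overall strategy as the paper's (write $H$ explicitly as a function of $t$, differentiate, and do a sign analysis), your execution diverges in a way that matters: the paper's own computation contains a calculus slip, and yours is the corrected version. The paper arrives at $\frac{d(H(X))}{dt} = -2 - \log\left(\frac{t}{1-t}(C-1)\right)$; the constant $-2$ is spurious, an artifact of mishandling the product-rule terms, which in fact cancel exactly. The correct derivative is the one you obtained, $H'(t) = \log\frac{1-t}{t(C-1)}$. The discrepancy is not cosmetic: at $t = 1/C$ (the uniform distribution, where entropy is maximized) the true derivative is $0$, not $-2$, so the paper's conclusion that the derivative is strictly negative on the whole interval is false at the left endpoint, and negativity of $H'$ alone cannot deliver strict decrease there. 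Your treatment addresses exactly this point: you establish $H' < 0$ only on the open interval $(1/C,1)$ and then upgrade to strict monotonic decrease on the closed interval $[1/C,1]$ via continuity and the mean value theorem. This is the correct repair of the gap that the paper's erroneous $-2$ happens to conceal; the proposition itself remains true, and your argument, unlike the paper's, proves it without error.
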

\begin{proof}
\begin{align*}
    H(X) =& - t\log t - (1-t)\log \frac{1-t}{C-1} \\
    \frac{d(H(X))}{dt} =& - \log t - 1 - \frac{1}{1-t} + \log \frac{1}{C-1} + \\
    & + \frac{t}{1-t} + \log 1-t \\
    =& -2 - \log t - \log C-1 + \log t-1 \\
    =& -2 - \log \left(\frac{t}{1-t}(C-1)\right).
\end{align*}
The argument to the $\log$ is clearly an increasing function of $t$ for $t>1$. Therefore, for $1/C \leq t < 1$, it is lower-bounded by setting $t = 1/C$. This gives
\begin{align*}
    \frac{d(H(X))}{dt} &\leq -2 - \log \left(\frac{1/C}{1-1/C}(C-1)\right) \\
    &< -\log \left(\frac{1/C}{1-1/C}(C-1)\right) = -\log 1 = 0\,.
\end{align*}
The derivative is always strictly negative with respect to $t$, so, as a function of $t$, $H(X)$ is always strictly decreasing.
\end{proof}

\subsection{Lemma 1 Supplement}
The following is a proof for the claim that $u_{\mathit{same}} < u_{\mathit{diff}}$, as stated in Section \ref{sec:proof}.

Decomposing $u_{\mathit{same}}$ according to the definition of variance (as the expectation of the square minus the square of the expectation) gives 
\begin{align*}
\underset{x, x' \sim T}{\mathbb{E}}[w^T(x - x') - \eta w'(||x||^2-||x'||^2)]^2 + \\ \text{Var}(w^T(x - x') - \eta w'(||x||^2-||x'||^2))\,.
\end{align*}
The expectation term equals $0$, as
\begin{gather*}
w^T\underset{x, x' \sim T}{\mathbb{E}}[(x - x')] - \eta w'\underset{x, x' \sim T}{\mathbb{E}}[(||x||^2-||x'||^2)] = \\
(w\mathbb{E}[T] - \mathbb{E}[T]) - \eta w'(\mathbb{E}[||T||^2]-\mathbb{E}[||T||^2]) = 0 \,.
\end{gather*}
By symmetry, we can replace covariances involving $x'$ with the same involving $x$. The remaining term can then be rearranged to give
\begin{gather*}
u_{same} = 2\text{Var}(w^Tx - \eta w'||x||^2) \\
 = 2w^TCov(T)w + 2\eta w' Var(||x||^2) - 4\text{Cov}(w^Tx, \eta w'||x||^2)\,.
\end{gather*}

Now rewrite $u_{\mathit{diff}}$. Decomposing as above gives
\begin{gather*}
\underset{x, x' \sim T}{\mathbb{E}}[w^T(x - x') - \eta w'(||x-x'||^2)]^2 + \\ \text{Var}(w^T(x - x') - \eta w'(||x-x'||^2))\,,
\end{gather*}
and here the expectation term does not equal 0:
\begin{gather*}
(w^T\underset{x, x' \sim T}{\mathbb{E}}[(x - x')] - \eta w'\underset{x, x' \sim T}{\mathbb{E}}[(||x-x'||^2)])^2 = \\
(\eta w')^2\underset{x, x' \sim T}{\mathbb{E}}[||x-x'||^2]^2
\,.
\end{gather*}
The variance term can be expanded to give:
\begin{gather*}
    \text{Var}(w^T(x - x') - \eta w'(||x-x'||^2)) = \\
    2w^TCov(T)w + 2\eta w'\text{Var}(||x - x'||^2) - \\
    4\text{Cov}(w^Tx, \eta w'||x - x'||^2)\,.
\end{gather*}

By comparing terms, we can see that this expression is at least as large as  $u_{same}$. First, consider the covariance terms. 
\[
\textbf{Claim.} \hspace{10pt} \text{Cov}(w^Tx, \eta w'||x - x'||^2) = \text{Cov}(w^Tx,\eta w'||x||^2). \\[10pt]
\]
\begin{align*}
    &\text{Cov}(w^Tx, \eta w'||x - x'||^2)  \\
    &=\mathbb{E}[w^Tx\eta w'||x-x'||^2] - \mathbb{E}[w^Tx]\mathbb{E}[\eta w'||x - x'||^2] \\
    &= \eta w'\mathbb{E}[w^Tx||x-x'||^2] - 0\mathbb{E}[\eta w'||x - x'||^2]\\
    &= \eta w'\mathbb{E}[w^Tx||x-x'||^2] \\
    &= \eta w'\mathbb{E}[w^Tx\sum_{k}x^2 - 2xx' + x^{'2}] \\
    &= \eta w'\sum_{k}\mathbb{E}[w^Txx_k^2] - 2\mathbb{E}[w^Txx_k]\mathbb{E}[x'] + \mathbb{E}[w^Tx]\mathbb{E}[x^{'2}] \\
    &= \eta w'\sum_{k}\mathbb{E}[w^Txx_k^2] - 2\mathbb{E}[w^Txx_k]\vec{0} + 0\mathbb{E}[x^{'2}] \\
    &= \eta w'\sum_{k}\mathbb{E}[w^Txx_k^2] \\
    &= \eta w'\mathbb{E}[w^Tx\sum_{k}x_k^2] \\
    &= \eta w'\mathbb{E}[w^Tx||x||^2] \\
    &= \eta w'\mathbb{E}[w^Tx||x||^2] - 0\mathbb{E}[\eta w'||x||^2] \\
        &= \mathbb{E}[w^Tx\eta w'||x||^2] - \mathbb{E}[w^Tx]\mathbb{E}[\eta w'||x||^2] \\
    &= \text{Cov}(w^Tx,\eta w'||x||^2)  \,.
\end{align*}
So, we see the covariance terms are equal. 

Next, compare the second variance terms
\[
\textbf{Claim.} \ \text{Var}(||x - x'||^2) \geq \text{Var}(||x||^2). \\[10pt]
\]
\begin{align*}
    &\text{Var}(||x - x'||^2)  \\
    =& \text{Var}\left(\sum_{k=0}^{nz}(x)_k^2  + (x')_k^2 - 2(x)_k(x')_k\right) \\
    =\ & \text{Var}\left(\sum_{k=0}^{nz}(x)_k^2\right) + \text{Var}\left(\sum_{k=0}^{nz}(x')_k^2\right) + 2\Var \left(\sum_{k=0}^{nz} x_kx'_k\right) \\
    =\ & 2\text{Var}\left(\sum_{k=0}^{nz}(x)_k^2\right)  + 2\Var \left(\sum_{k=0}^{nz} x_kx'_k\right)\\
    =\ & 2(\text{Var}(||x||^2) + \Var(x^Tx'))\\
    \geq \ & \text{Var}(||x||^2) \,.
\end{align*}
Assuming that the data are not all identical, this implies that $u_{\mathit{diff}}$ is strictly greater than $u_{same}$.
\begin{align*}
    &u_{\mathit{diff}} - u_{same}  \\
    = \ &(\eta w')^2\underset{x, x' \sim T}{\mathbb{E}}[||x-x'||^2]^2 + 2w^T\Cov(T)w + \\ &+2\eta w'\text{Var}(||x - x'||^2) - 4\Cov(w^Tx, \eta w'||x - x'||^2) - \\
        &- ((2w^T\Cov(T)w + 2\eta w' Var(||x||^2) \\
     &   - 4\Cov(w^Tx, \eta w'||x||^2))) \\
    = \ & (\eta w')^2\underset{x, x' \sim T}{\mathbb{E}}[||x-x'||^2]^2 + \\
    \hspace{10pt}& + 2\eta w'\left(\text{Var}(||x - x'||^2) - \text{Var}(||x||^2)\right) - \\
    & - 4\left(\Cov(w^Tx, \eta w'||x - x'||^2) - \Cov(w^Tx, \eta w'||x||^2)\right) \\
    = \ & (\eta w')^2\underset{x, x' \sim T}{\mathbb{E}}[||x-x'||^2]^2 + \\
    \hspace{10pt}& + 2\eta w'\left(\text{Var}(||x - x'||^2) - \text{Var}(||x||^2)\right) \\
    \geq \ & (\eta w')^2\underset{x, x' \sim T}{\mathbb{E}}[||x-x'||^2]^2 > 0\,.
\end{align*}

\subsection{Lemma 2 Supplement}\vspace*{-1ex}
The following is the complete proof of Lemma 2, which was omitted from the main paper.

\begin{proof} $v_{\mathit{diff}} - v_{same}$
\begin{align*}
    =\,& \mathbb{E}[(w^T(x-z) - w'(x-x')(x-z))^2] - \\
    &-\mathbb{E}[(w^T(x-z) - w'(x+x')(x-z))^2] \\
    =\ & \mathbb{E}[(w^T(x-z) - w'(x-x')^T(x-z))^2 - \\
    &-(w^T(x-z) - w'(x+x')^T(x-z))^2] \\
    =\ & \mathbb{E}[(w^T(x-z) - w'(x-x')^T(x-z) + \\
    &+w^T(x-z) - w'(x+x')^T(x-z)) \\
        &(w^T(x-z) - w'(x-x')^T(x-z) - \\
        &-w^T(x-z) - w'(x+x')^T(x-z))] \\
    =\ & \mathbb{E}[(2w^T(x-z) - w'(x-z)^T(x-x'+x+x'))\\
    &(- w'(x-z)^T(x-x'-x-x'))] \\
    =\ & \mathbb{E}[(2w^T(x-z) - 2w'(x-z)^T(x))(2w'(x-z)^T(x'))] \\
    =\ & 2\mathbb{E}[(w^T(x-z) - w'(x-z)^T(x))w'(x-z)^T]\mathbb{E}[x'] \\
    =\ & 2\mathbb{E}[(w^T(x-z) - w'(x-z)(x))w'(x-z)^T]\vec{0} = 0\,.
\end{align*}
\end{proof}

\subsection{Lemma 3 Supplement}
The following is the complete proof of Lemma 3, which was omitted from the main paper.

\begin{proof}
\begin{align*}
    \Var(T) =& \tfrac{1}{2}\underset{x, x' \sim T}{\mathbb{E}}[(x-x')^2] \\
    =& \tfrac{1}{2}(\underset{x, x' \sim T}{\mathbb{E}}[(x-x')^2|y(x) = y(x')]P(y(x) = y(x')) + \\ &\underset{x, x' \sim T}{\mathbb{E}}[(x-x')^2|y(x) \neq y(x')]P(y(x) \neq y(x'))\\
    =& \tfrac{1}{2}(sP(y(x) = y(x')) +  rP(y(x) \neq y(x'))\\
    =& \frac{1}{2}\left(s\frac{1}{C} +  r\frac{C-1}{C}\right)\,.
\end{align*}
Noting that $s = 2\mathbb{E}[\Var(T|C)]$, and using Eve's law, we have
\begin{align*}
    d =& \Var(T) -s \\
    =& \frac{1}{2}\left(s\frac{1}{C} +  r\frac{C-1}{C}\right) -s \\
    =& \frac{C-1}{2C}r - \frac{2C -1}{2C}s\,.
\end{align*}
\end{proof}

\begin{table*}[t]
\centering
  \footnotesize
    \medskip 
 \begin{tabular}{@{}lllllllllll@{}}
    \toprule
& \thead{Zero} & \thead{One} & \thead{Two} & \thead{Three} & \thead{Four} & \thead{Five} & \thead{Six}  & \thead{Seven} & \thead{Eight} & \thead{Nine}\\
\hline
    HDBSCAN & 6923 & 7878 & 6979 & 7095 & 6802 & 6290 & 6911 & 7384 & 6776 & 6962\\ 
    \hline
    GMM & 6942 & 6958 & 6791 & 7885 & 6976 & 7096 & 7350 & 6294 & 6906 & 6802\\ 
    \hline
    Ground Truth & 7000 & 7000 & 7000 & 7000 & 7000 & 7000 & 7000 & 7000 & 7000 & 7000 \\
    \hline
    \end{tabular}\smallskip 
\caption{Sizes of predicted clusters for MNIST.} \label{tab:mnist-cluster-sizes}
\centering
  \footnotesize
    \medskip 
 \begin{tabular}{@{}lllllllllll@{}}
    \toprule
& \thead{Zero} & \thead{One} & \thead{Two} & \thead{Three} & \thead{Four} & \thead{Five} & \thead{Six}  & \thead{Seven} & \thead{Eight} & \thead{Nine}\\
\hline
    HDBSCAN & 1565 & 1272 & 933 & 819 & 856 & 706 & 833 & 787 & 693 & 834\\ 
    \hline
    GMM & 1271 & 834 & 785 & 833 & 690 & 835 & 862 & 930 & 699 & 1559\\ 
    \hline
    Ground Truth & 1553 & 1269 & 929 & 824 & 852 & 716 & 834 & 792 & 708 & 821 \\
    \hline
    \end{tabular}\smallskip 
\caption{Sizes of predicted clusters for USPS.} \label{tab:usps-cluster-sizes}
\centering
  \footnotesize
    \medskip 
 \begin{tabular}{@{}lllllllllll@{}}
    \toprule
& \thead{Top} & \thead{Trouser} & \thead{Pullover} & \thead{Dress} & \thead{Coat} & \thead{Sandal} & \thead{Shirt}  & \thead{Sneaker} & \thead{Bag} & \thead{Boot}\\
\hline
    HDBSCAN & 7411 & 6755 & 56 & 6591 & 21333 & 6046 & 3173 & 5666 & 3711 & 9258\\ 
    \hline
    GMM & 6700 & 3111 & 16379 & 6807 & 6753 & 9127 & 7389 & 4482 & 8814 & 438\\ 
    \hline
    Ground Truth & 7000 & 7000 & 7000 & 7000 & 7000 & 7000 & 7000 & 7000 & 7000 & 7000 \\
    \hline
    \end{tabular}\smallskip 
\caption{Sizes of predicted clusters for FashionMNIST.} \label{tab:fash-cluster-sizes}\vspace*{-3ex}
\end{table*}

\subsection{Theorem 5 Supplement}
The following is a more detailed version of the argument given in the main paper.

If $y(x) = y(x')$, then Lemma \ref{lemma:equal-distance-from-3rd} means that the expected distance of the encodings of $x$ and $x'$ to any data point from another cluster is unchanged by whether the update was from points with the same or with different labels. Similarly, the distance between any two other points is unchanged by whether the update was from points with the same or with different labels. This establishes that $r_T = r_F$. As for the intra-cluster variance, it is smaller after the update with the same labels than with different labels. Lemma \ref{lemma:smaller-dist-if-same} shows that the expected distance between the encodings of the two points themselves is smaller if the labels were the same, and the same argument as above shows that all other expected distances within clusters are unchanged.

If $y(x) \neq y(x')$, then Lemma \ref{lemma:equal-distance-from-3rd} means that the expected distance of the encodings of $x$ and any data point from the same cluster is unchanged by whether the update was from points with the same or with different labels (and the same for $x'$). Similarly, the distance between any two other points is unchanged by whether the update was from points with the same or with different labels. This establishes that $s_T = s_F$. As for the \emph{inter}-cluster variance, it is larger after the update with the same labels than with different labels. Lemma \ref{lemma:smaller-dist-if-same} shows that the expected distance between the encodings of the two points themselves is larger if the labels were different, and the same argument as above shows that all other expected distances within clusters are unchanged.

\section{Appendix C: Extended Results} \label{app:extended-results}

The results in the main paper report the central tendency of five different training runs for each dataset. Tables \ref{tab:mnist-cluster-sizes}, \ref{tab:usps-cluster-sizes}, and \ref{tab:fash-cluster-sizes} show the sizes of the clusters predicted by SPC for one randomly selected run out of these five. On MNIST and USPS, where the accuracy of SPC is $>98\%$, the predicted sizes are close to the true sizes. On FashionMNIST, where the accuracy is $\sim 65\%$, there is a much greater variance. This accounts for the discrepancy in ACC and NMI for FashionMNIST. Most of the errors are put into one large cluster, specifically the cluster that was aligned to `coat' is over three times larger than it should be. This hurts accuracy more than NMI, because the incorrect data points in the `coat' cluster count for zero when calculating the accuracy, but they are not randomly distributed among the other classes, so the conditional entropy of a data point that was mis-clustered as a coat is $<\log(10)$. Actually, most of the mistakes in the `coat' cluster are pullovers or shirts, and almost none of them are, for examples, boots or tops. Comparing the cluster sizes for SPC-HDBSCAN and SPC-GMM also accounts for the differences across ACC and NMI between these two settings on FashionMNIST: SPC-GMM produces more uniformly-sized clusters, so the difference between ACC and NMI is smaller.

\end{document}